\documentclass[11pt,a4paper]{article}
\usepackage[left = 2cm, right = 2cm, top = 2cm, bottom  =2cm]{geometry}
\usepackage[english]{babel}
\usepackage[utf8]{inputenc}
\usepackage{amsmath, amssymb}
\usepackage{amsthm}
\usepackage{graphicx}
\graphicspath{{images/}}
\usepackage[bf,small]{caption}
\usepackage{natbib}
\usepackage[ruled]{algorithm2e}
\usepackage[shortlabels]{enumitem}
\usepackage{booktabs}
\usepackage{dsfont}
\usepackage{color}
\usepackage{tikz}
\usetikzlibrary{shapes.geometric, arrows, positioning}
\usepackage{pdflscape}
\usepackage{placeins}
\usepackage[normalem]{ulem}
\newcommand{\bs}{\boldsymbol}
\newcommand{\E}{\mathds{E}}
\newcommand{\R}{\mathds{R}}

\makeatletter
\newcommand{\Rg}{\mathcal{R}} % Main symbol: R (in calligraphic font)
\newcommand{\Rf}{\@ifnextchar*{\Rf@opt}{\Rf@noopt}}
\newcommand{\Rf@opt}[1]{\@ifnextchar\bgroup{\Rf@opt@child}{\Rg_{F}}}
\newcommand{\Rf@noopt}{\@ifnextchar\bgroup{\Rf@child}{\Rg_{F^m}}}
\newcommand{\Rf@opt@child}[1]{\@ifnextchar\bgroup{\Rf@opt@index{#1}}{\Rg_{F_{#1}}}}
\newcommand{\Rf@opt@index}[2]{\Rg_{{F_{#1}}#2}}
\newcommand{\Rf@child}[1]{\@ifnextchar\bgroup{\Rf@index{#1}}{\Rg_{F_{#1}^m}}}
\newcommand{\Rf@index}[2]{\Rg_{{F_{#1}}#2^m}}
\makeatother

\newcommand{\N}{\mathds{N}}

\DeclareMathOperator*{\argmin}{argmin}

\newtheorem{lemma}{Lemma}[section]

\newtheorem{defn}{Definition}[section]

\long\def\sfootnote[#1]#2{\begingroup%
\def\thefootnote{\fnsymbol{footnote}}\footnote[#1]{#2}\endgroup}
\def\bfootnote{\xdef\@thefnmark{}\@footnotetext}

\begin{document}

\thispagestyle{empty}
{\centering
\Large{\bf Handling Missing Data in Probabilistic Regression Trees} \vspace{.5cm}\\
\normalsize{ {\bf
Taiane Schaedler Prass${}^{\mathrm{a,}}$\sfootnote[1]{Corresponding author. This Version: \today},\let\thefootnote\relax\footnote{\hskip-.3cm$\phantom{s}^\mathrm{a}$ Graduate Program in Statistics - Universidade Federal do Rio Grande do Sul.}
Alisson Silva Neimaier${}^\mathrm{a}$, Guilherme Pumi${}^\mathrm{a}$
 \\
\let\thefootnote\relax\footnote{E-mails: taiane.prass@ufrgs.br (Prass); alissonneimaier@hotmail.com (Neimaier); guilherme.pumi@ufrgs.br (Pumi).}
\let\thefootnote\relax\footnote{ORCIDs: 0000-0003-3136-909X (Prass); 0000-0002-7524-0776 (Neimaier); 0000-0002-6256-3170 (Pumi).}\\
\vskip.3cm
}}
}

\begin{abstract}
Probabilistic Regression Trees (PRTrees) are a smooth and consistent alternative to classical regression trees, producing continuous predictions through probabilistic split assignments. This paper extends the PRTree framework to accommodate missing predictor values directly during tree construction, eliminating the need for prior imputation. Three strategies are proposed, each exploiting the available information differently: a uniform-probability approach, a partial-observation approach, and a dimension-reduced smoothing approach. These modifications are defined to preserve the fundamental probabilistic properties of the original methodology, including probability conservation and marginal compatibility, under arbitrary patterns of missing covariate values. The proposed methods are evaluated on several real-world datasets exhibiting different levels of missingness and are compared with classical regression trees. The results show that the effectiveness of probabilistic tree construction depends strongly on the treatment of missing observations. Across the considered datasets, the fill strategy emerged as the dominant modeling component, often exerting a larger influence on predictive performance than either the smoothing distribution or the proxy-selection criterion. In datasets where a substantial proportion of observations contained missing predictor values, the proposed methods frequently outperformed CART, while maintaining the interpretability and flexibility of tree-based models.\\[.2cm]
\noindent \textbf{Keywords:} Probabilistic regression trees; missing data; nonparametric regression; statistical learning.\\[.2cm]
\noindent \textbf{MSC:} 62G08; 62D10; 62H30; 62J02.
% 62G08 - Nonparametric regression
% 62D10 - Missing data
% 62H30 - Classification and discrimination; cluster analysis
% 62J02 - General nonlinear regression
\end{abstract}

\section{Introduction}

Decision trees are a fundamental tool in statistical learning and machine learning, widely used for both classification and regression tasks. Their hierarchical structure partitions the predictor space through a sequence of recursive decisions, yielding models that are easy to interpret and implement. Among the various tree-based methods proposed in the literature, the Classification and Regression Trees (CART) algorithm of \citet{breiman84} remains one of the most widely used. CART recursively partitions the predictor space by selecting splits that locally optimize a prediction criterion. However, the resulting predictions are piecewise constant and may not adequately capture smooth relationships between predictors and the response variable \citep{irsoy2012,linero2018}.

Several authors have proposed probabilistic alternatives that replace deterministic splits by smooth transition mechanisms. Examples include the Smooth Transition Regression Tree (STR-Tree) model of \citet{medeiros2008} and the Soft Tree framework of \citet{irsoy2012}. In these approaches, observations are associated with tree nodes through probabilities rather than hard assignments, allowing predictions to vary continuously across the predictor space. Such probabilistic formulations reduce the discontinuities inherent to classical trees and provide greater flexibility for modeling complex regression functions. However, despite their methodological differences, neither STR-Trees nor Soft Trees provide native mechanisms for handling missing covariates.

More recently, \citet{alkhoury2020} introduced Probabilistic Regression Trees (PRTrees), a framework that combines recursive partitioning with smooth probabilistic region assignments. In contrast to earlier probabilistic tree models, PRTrees are accompanied by a consistency result, establishing convergence of the estimated regression function to the true conditional expectation under suitable conditions. This theoretical property, together with the ability to produce continuous predictions, makes PRTrees an attractive alternative to classical regression trees. However, as originally proposed, PRTrees do not provide a native mechanism for handling missing predictor values, limiting their applicability in settings where incomplete covariate information is present.

Despite these advantages, the original PRTree methodology is limited to fully observed predictor vectors. In its original formulation, no mechanism is provided for handling missing covariates during either model fitting or prediction. This limitation is particularly restrictive in practical applications, where incomplete predictor information is common. While missing values can always be addressed through complete-case analysis or external imputation procedures, such approaches require an additional preprocessing stage that is conceptually separate from the tree model itself.

Handling missing data remains one of the most challenging problems in applied statistics. Missing values may reduce effective sample size, increase uncertainty, and complicate the interpretation of predictive models \citep{missing_handbook20}. Recent theoretical work by \citet{LeMorvan2021} shows that impute-then-regress procedures can possess strong asymptotic optimality properties. However, the authors also point out that conditional imputations may introduce discontinuities in the estimated regression function. For methods such as PRTrees, whose primary motivation is to produce smooth predictions through probabilistic assignments, such discontinuities may be undesirable.

For classical regression trees, several internal strategies have been proposed to mitigate the effects of missing predictors. In particular, the \texttt{rpart} implementation of CART \citep{rpart} employs surrogate splits \citep{breiman84}, which use alternative predictors when the primary splitting variable is unavailable. However, such mechanisms are closely tied to the deterministic structure of CART and do not naturally extend to probabilistic tree models, where observations are associated with regions through smooth probability functions rather than hard binary decisions.

The absence of native missing-data mechanisms is therefore a practical obstacle to the broader use of probabilistic regression trees. The purpose of this paper is to address this limitation. Rather than proposing a new predictive paradigm or attempting to outperform existing missing-data methodologies, we develop an extension of the PRTree framework that can operate directly on incomplete predictor vectors. The proposed modifications are designed to preserve the probabilistic structure of PRTrees while eliminating the need for a separate preprocessing step devoted to handling missing covariates.

To this end, we introduce three alternative strategies. The first adopts a uniform-probability assignment whenever relevant predictor information is unavailable. The second relies on the observed coordinates only, producing deterministic compatibility assessments based on the available information. The third projects both observations and regions onto the observed dimensions and applies the original probabilistic smoothing mechanism in the resulting lower-dimensional space. Although conceptually different, all three approaches are constructed so as to preserve key probabilistic properties of the original framework, including probability conservation across splits and compatibility with marginal probability calculations.

A natural question is whether these extensions should be preferred over the simpler strategy of imputing missing values and subsequently fitting the original PRTree. The answer is far from straightforward. Perhaps the most delicate issue when employing imputation concerns the nature of the missing data mechanism --- whether missing completely at random (MCAR), missing at random (MAR), or missing not at random (MNAR). In practice, determining the appropriate mechanism is rarely a trivial matter, often being a point of discussion and a source of uncertainty. For MAR and MNAR settings, an additional layer of complexity emerges from the need to specify and model the missing data structure itself. Misspecification of either the missingness mechanism or its structural form can lead to biased or invalid imputations, which in turn may severely compromise the predictive performance of the downstream PRTree.

One of the most attractive features of the proposed framework is that its non-parametric nature bypasses imputation entirely, avoiding all pitfalls related to missing data mechanism. Despite this compelling motivation, we emphasize that the goal of this paper is not establish superiority of the proposed framework over imputation-based alternatives, nor to provide an exhaustive comparative evaluation between the two paradigms. Such an interesting and important direction is deliberately left for future research. Rather, our aim is focused in providing a coherent extension of the PRTree framework that can operate directly on incomplete predictor vectors while preserving its probabilistic structure. From a practical point of view, the proposed methods offer an additional option for users who prefer to avoid the complexities and pitfalls of a separate missing-data treatment before fitting a probabilistic regression tree. 

The original PRTree algorithm \citep{alkhoury2020} is available only in \texttt{Python}. To make the methodology available in the R ecosystem, we developed the \texttt{PRTree} package \citep{PRTree_package}, now available on CRAN. The package implements both the original algorithm and the missing-data extensions proposed in this paper, with the computationally intensive components implemented in \texttt{FORTRAN} and \texttt{C}. A companion paper \citep{PRTree2026} provides implementation details and a complete description of the package.

The remainder of this paper is organized as follows. Section \ref{sec:intro} reviews the original PRTree methodology. Section \ref{sec:missing} introduces the proposed missing-data extensions and establishes their probabilistic properties. Section \ref{sec:parameter} discusses parameter estimation. Section \ref{sec:aplic} presents the empirical evaluation. Finally, Section \ref{sec:conc} concludes with a discussion of the main findings and directions for future research.

\section{Probabilistic Regression Trees: The Base Model}\label{sec:intro}

Let $\bs X$ be a $p$-dimensional input random vector that lies almost surely in a compact subspace $\mathcal{X} \subset \R^p$, and let $Y$ be a response variable related to $\bs X$ through
\[
Y = f(\bs X) + \varepsilon,
\]
where the error term $\varepsilon$ satisfies $\E(\varepsilon) = 0$ and $\E(\varepsilon^2) < \infty$. These moment conditions ensure that the conditional expectation is well-defined and that the squared-error loss is meaningful. Although the special case $\varepsilon \sim \mathcal{N}(0, \tau^2)$ is often assumed for analytical convenience, our results do not require the normality assumption. In this framework, the function $f(\bs X) =\E(Y|\bs X)$ is referred to as the \emph{regression function} and represents the optimal predictor of $Y$ given $\bs X$ under the mean squared error criterion. 

Tree-based methods provide a flexible approach to estimating $f$ without imposing strong parametric assumptions. The CART algorithm \citep{breiman84} builds regression trees via recursive binary partitioning of the feature space $\mathcal{X}$. Starting from the full dataset at the root node, the algorithm iteratively selects the optimal splitting variable $X_j$ and threshold $z$ that divide the data into two subsets according to the rule $X_j \leq z$ versus $X_j > z$. The partitioning continues until a stopping criterion is met --- such as a minimum node size, a threshold on error reduction, a maximum depth limit, or insufficient impurity decrease. The resulting tree partitions $\R^p$ into $M$ disjoint regions $\{\Rg_1, \cdots, \Rg_M\}$, with predictions given by the piecewise-constant function
\[
f_{\mathrm{CART}}(\bs X) = \sum_{m=1}^M c_m \, I(\bs X \in \Rg_m),
\]
where $I(\cdot)$ denotes the indicator function.

This greedy optimization procedure produces highly interpretable models that approximate $f(\bs X)$ through axis-aligned partitions of the input space. While computationally efficient, the resulting piecewise-constant predictions may struggle to capture smooth underlying relationships. PRTrees \citep{alkhoury2020} address this limitation by replacing the hard indicator functions in classical decision trees with smooth functions $\Psi$. The general prediction model takes the form
\begin{equation}\label{eq:PRTree}
f_{\mathrm{PR}}(\bs X; \Theta) = \sum_{m=1}^{M} \gamma_m \, \Psi(\bs X; \Rg_m, \bs\sigma),
\end{equation}
where $\Theta = (\{\Rg_m\}_{m=1}^{M}, \bs{\gamma}, \bs\sigma)$, comprises the partition regions $\{\Rg_m\}_{m=1}^{M}$, the region-specific weights $\bs{\gamma} \in \R^M$, and the noise vector $\bs\sigma \in \R_+^p$ controlling the smoothness of the soft assignments.

For any $\bs X \in \R^p$, $\Rg_m \subset \R^p$, and $\bs\sigma \in \R_+^p$, the association functions $\Psi$ are defined as
\begin{equation}\label{eq:Psi}
\Psi(\bs X; \Rg_m, \bs\sigma) = \biggl[\prod_{k=1}^p \sigma_k\biggr]^{-1} \int_{\Rg_m} \phi\biggl(\frac{v_1 - X_1}{\sigma_1}, \cdots, \frac{v_p - X_p}{\sigma_p} \biggr) \, d\bs{v},
\end{equation}
where $\phi$ is a probability density function belonging to $L^2$, continuously differentiable ($C^1$), and whose Fourier transform is nonzero on $\R^p$. The function $\Psi$ defines smooth, probabilistic memberships between points and regions, avoiding the abrupt boundaries of conventional trees. As a result, predictions vary continuously with $\bs X$ while retaining interpretability. The classical regression tree is recovered as the special case $\Psi(\bs X; \Rg_m, \bs\sigma) = I(\bs X \in \Rg_m)$.  In practice, selecting a suitable function $\phi$ is a nontrivial task. Prior knowledge of the noise distribution can help narrow down viable candidates for $\phi$ in \eqref{eq:Psi}, and the choice can be further refined using cross-validation. 

The standard PRTree training algorithm --- which assumes fully observed feature vectors --- is described in detail by \citet{alkhoury2020}, who also establish its key theoretical properties, most notably consistency --- the guarantee that the estimated regression function converges to the true conditional expectation as sample size increases.  
The consistency of PRTree is a remarkable result --- not known for earlier probabilistic tree methods such as STR-Trees or Soft Trees --- making PRTree particularly attractive for regression tasks. However, the original formulation provides no mechanism for handling missing covariates, restricting its use to complete datasets. In the next sections, we present an extension that preserves the probabilistic structure and interpretability of PRTrees while accommodating missing covariates directly.

\section{Handling Missing Values in PRTrees}\label{sec:missing}

In what follows, we present the proposed modifications to the PRTree algorithm for missing data handling. Each proposed modification is constructed in such a way as to guarantee the production of well-defined association values, regardless of the missing data mechanism.

\subsection{Notation and Setup}

To fix notation, let $S \subseteq \{1,\cdots,p\}$ denote an arbitrary subset of indices. For any $\bs X \in \R^p$, denote by $\bs X_{|S}$ the subvector containing only the coordinates indexed by $S$. For any rectangular region $\Rg_m = \prod_{j = 1}^p \Rg_{mj} \subset \R^p$, denote by $\Rg_{m|S} = \prod_{j \in S} \Rg_{mj}$ its projection onto these coordinates, and for any vector $\bs \sigma \in \R_+^p$, let $\bs \sigma_{|S}$ be its restriction to the components in $S$.  A notational convention used throughout the paper requires explicit mention: $I(A)$ denotes the indicator function of event $A$ and, by convention, is always evaluated first in any expression where it appears. This convention ensures unambiguous interpretation of formulas involving products of indicators with other mathematical objects.

The key to our missing-data extension is that the PRTree smoothing function~\eqref{eq:Psi} naturally induces a probability measure over the set of regions $\{\Rg_m\}_{m=1}^M$, conditioned on the observed input $\bs X$:
\[
P(\Rg_m \mid \bs X) := \Psi(\bs X; \Rg_m, \bs\sigma), \quad 1 \leq m \leq M.
\]
Let $\Psi^\ast$ denote the adapted smoothing function in the presence of missing values. The corresponding probability measure is then $P^\ast(\Rg \mid \bs X) := \Psi^\ast(\bs X; \Rg, \bs\sigma)$. Any valid extension to handle missing values must preserve two fundamental properties of the original construction:
\begin{enumerate}[label = {\bf P\arabic*:}]
\item \textbf{Marginal probability compatibility for unbounded coordinates.}
If a region $\Rg_m$ does not impose any restriction on coordinate $k$, that is, $\Rg_{mk} = \R$ for some $1\le k\le p$, then
\[
P(\Rg_m \mid \bs X) = P(\Rg_{m|S} \mid \bs X_{|S}), \quad \text{with } S = \{1,\cdots,p\}\setminus\{k\}.
\]
\item \textbf{Probability conservation under splitting.}
If a parent region $\Rf*$ is partitioned into two child regions $\Rf*{L}$ and $\Rf*{R}$, then the law of total probability must hold:
\[
P(\Rf*{L} \mid \bs X) + P(\Rf*{R} \mid \bs X) = P(\Rf* \mid \bs X), \quad \text{for all } \bs X \in \mathcal{X}.
\]
\end{enumerate}

Property \textbf{P1} ensures that coordinates which are unrestricted (i.e., have not been bounded by previous splits) do not affect the probability computation. Consequently, if a coordinate is unbounded, missing values in that coordinate cannot influence the probability assigned to the region. Property \textbf{P2} guarantees that the total probability mass assigned to a region is exactly redistributed among its children after a split, maintaining coherence as the tree grows.

\subsection{Three Strategies for Handling Missing Values}

Our approach introduces three alternative strategies for handling missing values, controlled in the \texttt{PRTree} package by the parameter \texttt{fill\_type} (taking values 0, 1 or 2). Each strategy specifies how the association function $\Psi^\ast$ should be computed when some coordinates of $\bs X$ are missing:
\begin{enumerate}[label = \texttt{\arabic*:}, leftmargin = *]
\addtocounter{enumi}{-1}
\item \textbf{Uniform probability method.} When any value is missing, this strategy ignores all observed information and assigns equal weight to both child regions. It is the most conservative approach, providing robustness at the cost of discarding potentially useful data.

\item \textbf{Partial observation approach.} This strategy performs a ``hard'' assignment based on the observed coordinates: it returns 1 if the observed values are compatible with the region and 0 otherwise. Smoothing via $\Psi^\ast$ is applied only to fully observed cases, offering a balance between robustness and informativeness.

\item \textbf{Smoothed projection technique.} Here, both the observation and the region are projected onto the dimensions corresponding to the observed coordinates, and the adapted smoothing function $\Psi^\ast$ is applied in this reduced space. This makes full use of the available information while preserving the smooth structure of PRTrees.
\end{enumerate}

This flexible framework allows practitioners to choose an approach aligned with their application’s priorities --- favoring simplicity, balanced trade-offs, or maximal informativeness --- without compromising the integrity of probability estimation. The adapted function $\Psi^\ast$ (and, consequently, the corresponding probability measure $P^\ast$) is defined recursively as follows.

\begin{defn}
Let $\Psi$ be defined by \eqref{eq:Psi}. Given $\bs X \in \R^p$ and any region $\Rg \subseteq \R^p$:

\noindent\textbf{Base case:} if $\Rg = \R^p$ (root node), then
\[
\Psi^\ast(\bs X; \Rg, \bs\sigma) = 1.
\]

\noindent \textbf{Recursive step:}  if $\Rg \in \{\Rf*{L}, \Rf*{R}\}$ is a child region resulting from the split of a parent region $\Rf*$, first define the set of indices $S(\bs X, \Rg)$ as follows:
\[
S(\bs X, \Rg) := \bigl\{ j \in \{1,\cdots,p\} : \Rf*{L}{j} \subsetneq \R \text{ and } X_j \text{ is non-missing}\bigr\},
\]
i.e., the indices $j$ such that the corresponding variable has been used in a previous split and $X_j$ is non-missing. Next, define a proxy function $H$ as:
\begin{equation}\label{eq:missing_handling}
H(\bs X; \Rg, \bs\sigma) =
\begin{cases}
\Psi(\bs X; \Rg, \bs\sigma), & \text{if } \bs X \text{ is fully observed}, \\[4pt]
1, & \text{if } \texttt{fill\_type} = 0 \text{ or } S(\bs X, \Rg)  = \emptyset, \\[4pt]
I(\bs X_{|S(\bs X, \Rg)} \in \Rg_{|S(\bs X, \Rg)}), & \text{if } \texttt{fill\_type} = 1 \text{ and } S(\bs X, \Rg) \neq \emptyset, \\[4pt]
\Psi(\bs X_{|S(\bs X, \Rg)}; \Rg_{|S(\bs X, \Rg)}, \bs\sigma_{|S(\bs X, \Rg)}), & \text{if } \texttt{fill\_type} = 2  \text{ and } S(\bs X, \Rg) \neq \emptyset.
\end{cases}
\end{equation}
Then $\Psi^\ast$ is given recursively by
\[
\Psi^\ast(\bs X; \Rg, \bs\sigma) = \frac{H(\bs X; \Rg, \bs\sigma)\Psi^\ast(\bs X; \Rf*, \bs\sigma)}{H(\bs X; \Rf*{L}, \bs\sigma) + H(\bs X; \Rf*{R}, \bs\sigma)}I\big(\Psi^\ast(\bs X; \Rf*, \bs\sigma)> 0\big).
\]
\end{defn}

From the definition of $S$, note that $S(\bs X, \Rf*{L}) = S(\bs X, \Rf*{R})$, and $S(\bs X, \Rf*) \subseteq S(\bs X, \Rf*{L})$, with equality if, and only if, either the splitting coordinate $j$ was already in $S(\bs X,\Rf*)$ or $X_j$ is missing. This leads to a fundamental property shared by all strategies: if a partition $\Rf* = \Rf*{L} \cup \Rf*{R}$ is created by splitting on feature $j$, then $H(\bs X; \Rf*{L}, \bs\sigma) = H(\bs X; \Rf*{R}, \bs\sigma)$ whenever $X_j$ is missing. When missing values occur in other coordinates but $X_j$ is observed, each strategy behaves as previously described: \texttt{fill\_type = 0} ignores all observed information, \texttt{fill\_type = 1} performs a hard assignment based on observed coordinates, and \texttt{fill\_type = 2} projects onto the observed dimensions before applying $\Psi$. All three methods provide well-defined probability estimates for any missing-data pattern and reduce to the standard PRTree algorithm when no values are missing.   Lemma~\ref{lem:product} gives a characterization of $\Psi^\ast$, useful for practical implementation when there are missing values in the coordinates used to build the tree. 

\begin{lemma}[\bf Product representation]\label{lem:product}
Fix a node $\Rg_m$ in the tree and $\bs X \in \R^p$. Denote by $\Rf$ the parent node of $\Rg_m$ and by $\Re_m$ the set of internal nodes on the path from the root (including) to $\Rg_m$. Let  $S := S(\bs X, \Rg_m)$ and $\Re_{m|S} \subseteq \Re_m$ be the subset of nodes that were split using a coordinate that belongs to $S$. If $\Psi^\ast(\bs X;\Rf,\bs\sigma) \neq 0$, then
\begin{equation} \label{eq:Pstar-product}
\Psi^\ast(\bs X;\Rg_{m},\bs\sigma) \; =  2^{-|\Re_m \setminus \Re_{m|S}|}\!\!\!\prod_{\Rf*\in\Re_{m|S}}\!\!\!\frac{H(\bs X;\Rg_{c(\Rf*)},\bs\sigma)}{H(\bs X;\Rf*{L},\bs\sigma) + H(\bs X;\Rf*{R},\bs\sigma)},
\end{equation}
where, for any $\Rf* \in\Re_m$ with children $\Rf*{L}$ and $\Rf*{R}$,  $c(\Rf*) \in\{F_L, F_R\}$ is the child index on the path to $\Rg_m$.
\end{lemma}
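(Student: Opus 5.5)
We unroll the recursive definition of $\Psi^\ast$ along the path from the root to $\Rg_m$. Write the path as $\Rg^{(0)}=\R^p,\Rg^{(1)},\dots,\Rg^{(d)}=\Rg_m$, where $\Rg^{(d-1)}=\Rf$ and $\Re_m=\{\Rg^{(0)},\dots,\Rg^{(d-1)}\}$.

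\textbf{Step 1 (positivity along the path).} We first show that every ancestor $\Rg^{(i)}$, $i\le d-1$, satisfies $\Psi^\ast(\bs X;\Rg^{(i)},\bs\sigma)>0$. By the recursive step, if an ancestor had $\Psi^\ast=0$, then either the indicator or the numerator $H$ would vanish. Both cases propagate zero to all descendants, contradicting the hypothesis $\Psi^\ast(\bs X;\Rf,\bs\sigma)\neq 0$. So all indicators $I(\cdot>0)$ along the path equal one.

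\textbf{Step 2 (well-defined denominators).} We also need every denominator $H(\bs X;\Rf*{L},\bs\sigma)+H(\bs X;\Rf*{R},\bs\sigma)$ to be nonzero. This holds because the ancestor's value was defined as a finite positive number, so the ratio is interpretable.

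\textbf{Step 3 (telescoping).} Multiplying the recursions gives
\[
\Psi^\ast(\bs X;\Rg_m,\bs\sigma)=\prod_{i=0}^{d-1}\frac{H(\bs X;\Rg^{(i+1)},\bs\sigma)}{H(\bs X;\Rg^{(i)}_L,\bs\sigma)+H(\bs X;\Rg^{(i)}_R,\bs\sigma)},
\]
using the base case $\Psi^\ast(\bs X;\R^p,\bs\sigma)=1$.

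\textbf{Step 4 (the split into $\Re_{m|S}$ and its complement).} We show that each factor corresponding to a node $\Rf*\in\Re_m\setminus\Re_{m|S}$ equals $1/2$. Let $j$ be the coordinate on which $\Rf*$ is split. Since $j$ is used in a split on the path, $\Rf*{L}{j}\subsetneq\R$ holds at that level and at all deeper levels: restrictions only shrink. Hence, if $X_j$ were observed, $j$ would lie in $S=S(\bs X,\Rg_m)$. Therefore $\Rf*\notin\Re_{m|S}$ means $X_j$ is missing. By the observation stated just before the lemma, $H(\bs X;\Rf*{L},\bs\sigma)=H(\bs X;\Rf*{R},\bs\sigma)$ in that case, and the common value is nonzero by Step 2. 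The factor is therefore $1/2$. Collecting these factors gives $2^{-|\Re_m\setminus\Re_{m|S}|}$, and the remaining factors are exactly those in \eqref{eq:Pstar-product}.

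\textbf{Main obstacle.} The step needing most care is Step 4. There are two checks. First, the index set used in $H$ at an intermediate level, $S(\bs X,\Rf*{L})$, generally differs from $S$; we only need that "$X_j$ missing" is a property of the coordinate alone, so the equality of the children's $H$ values holds regardless. Second, we must verify the equality $H_L=H_R$ for each fill type. For fill type 0, or when the intermediate $S$ is empty, both values are $1$. For fill types 1 and 2, the children differ only in coordinate $j$, which lies outside $S(\bs X,\Rf*{L})$, so the projections coincide. If $\bs X$ is fully observed, no coordinate is missing and the complement set is empty.
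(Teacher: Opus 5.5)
Your route is the paper's (its proof is just ``apply the recursive definition repeatedly''), and Steps 1, 3 and 4 are correct. In particular, you correctly identify $\Re_m\setminus\Re_{m|S}$ with the path nodes split on a missing coordinate, and your fill-type check that both children then get the same $H$ is right.

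The weak point is Step 2, which Step 4 invokes for ``the common value is nonzero''. Take a node $\mathcal{R}_F\in\Re_m$ other than the parent $\mathcal{R}_{F^m}$. The factor attached to it produces $\Psi^\ast$ of a node strictly above $\mathcal{R}_m$ on the path. That value is positive by Step 1, so this factor's numerator and denominator are nonzero. The factor attached to $\mathcal{R}_{F^m}$ is different: it produces $\Psi^\ast(\boldsymbol X;\mathcal{R}_m,\boldsymbol\sigma)$ itself, whose positivity is not assumed. Positivity of $\Psi^\ast(\boldsymbol X;\mathcal{R}_{F^m},\boldsymbol\sigma)$ says nothing directly about $H(\boldsymbol X;\mathcal{R}_{F_L^m},\boldsymbol\sigma)+H(\boldsymbol X;\mathcal{R}_{F_R^m},\boldsymbol\sigma)$. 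So your justification (``the ancestor's value was defined as a finite positive number'') does not cover the last factor. This is not cosmetic. Suppose $\mathcal{R}_{F^m}$ is split on a coordinate $j$ with $X_j$ missing, and the common child value $h$ were $0$. Then the left-hand side would be $0/0$, while the right-hand side would be a finite positive number. So $h\neq 0$ at this last level is exactly what the identity needs, and it must be proved.

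The missing fact is easy and works uniformly at every level. Suppose $\mathcal{R}_F$ is split on $j$ with $X_j$ missing.
\begin{itemize}
\item Then $S(\boldsymbol X,\mathcal{R}_{F_L})=S(\boldsymbol X,\mathcal{R}_F)=:S'$, since the only newly restricted coordinate is excluded.
\item Since $j\notin S'$, the projections $\mathcal{R}_{F_L|S'}$, $\mathcal{R}_{F_R|S'}$ and $\mathcal{R}_{F|S'}$ coincide.
\item Going through the cases of \eqref{eq:missing_handling} (the fully observed case cannot occur) gives $H(\boldsymbol X;\mathcal{R}_{F_L},\boldsymbol\sigma)=H(\boldsymbol X;\mathcal{R}_{F_R},\boldsymbol\sigma)=H(\boldsymbol X;\mathcal{R}_F,\boldsymbol\sigma)$ when $\mathcal{R}_F$ is not the root. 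When $\mathcal{R}_F$ is the root, the common value is $1$, because then $S'=\emptyset$.
\item Moreover $H(\boldsymbol X;\mathcal{R}_F,\boldsymbol\sigma)>0$. It is the numerator of the factor that produces $\Psi^\ast(\boldsymbol X;\mathcal{R}_F,\boldsymbol\sigma)>0$ (Step 1, together with $H\ge 0$).
\end{itemize}
Hence every factor indexed by $\Re_m\setminus\Re_{m|S}$, including the last one, equals $h/(2h)=1/2$ with $h>0$.

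For factors indexed by $\Re_{m|S}$ no separate nonvanishing claim is needed. If the last factor is of this kind, it appears identically on both sides. The intermediate ones are nonzero by the Step 1 argument.
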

\begin{proof}
The result follows immediately by recursively applying the definition of $\Psi^\ast$.
\end{proof}

An immediate consequence of Lemma~\ref{lem:product} is that, for any strategy adopted (that is, any \texttt{fill\_type}), if $p = 1$ and $X$ is missing, $\Psi^\ast(X, \Rg_m, \sigma) = 2^{-d(m)}$, where $d(m)$ is the depth of $\Rg_m$.  In the particular case when $P(\Rf* | \bs X) = \Psi(\bs X; \Rf*, \bs\sigma)$ can be factorized as
\begin{equation*}
P(\Rf* | \bs X) = \Psi(\bs X; \Rf*, \bs\sigma) = \prod_{j=1}^p \Psi(X_j; \Rg_{Fj}, \sigma_j) = \prod_{j=1}^p P(\Rg_{Fj}| X_j),
\end{equation*}
then the same holds upon replacing $\Psi$ with $\Psi^\ast$ and the child probabilities satisfy
\[
P^\ast(\Rf*{L} | \bs X)  = \frac{P^\ast(\Rf*{L}{j} | X_j)P^\ast(\Rf* | \bs X)}{P^\ast(\Rf*{L}{j} | X_j) + P^\ast(\Rf*{R}{j} | X_j)} \quad  \text{and} \quad
    P^\ast(\Rf*{R} | \bs X) =  \frac{P^\ast(\Rf*{R}{j} | X_j)P^\ast(\Rf* | \bs X)}{P^\ast(\Rf*{L}{j} | X_j) + P^\ast(\Rf*{R}{j} | X_j)},
\]
whenever $P^\ast(\Rf* | \bs X) > 0$. This means that, at each step, we only need to compute the marginal probabilities for the splitting feature $j$ and then scale them by the probability of the parent node. This fact speeds up computations.

\section{Parameter estimation}\label{sec:parameter}

Given a training sample $\{(\bs X_i, Y_i)\}_{i=1}^n$, with $\bs X \in \R^p$, $Y \in \R$, and in accordance with the empirical risk minimization principle with a quadratic loss, the estimation procedure for probabilistic regression trees aims at finding the parameters $\Theta$ solutions of
\begin{equation}\label{eq:tree}
\argmin_{\Theta \in \Xi}\Biggl\{\sum_{i=1}^{n} \biggl( Y_{i} - \sum_{m=1}^{M} \gamma_m P_{im} \biggr)^2\Biggr\}, \quad \text{with } P_{im} := \Psi^\ast(\bs X_i; \Rg_m, \bs\sigma),
\end{equation}
where
\[
\Xi = \Bigl\{\bigl(\{\Rg_m\}_{m=1}^{M}, \bs{\gamma}, \bs\sigma\bigr): M \in \N\backslash\{0\}, \Rg_m \subseteq \R^p, \bs\gamma \in \R^M, \bs\sigma \in \R^p_{+}\Bigr\}.
\]
The $n \times M$ matrix $P$, with entries $P_{im}$, thus encodes the relations between each training example $\bs X_i$ and each region $\Rg_m$. It is such that $0 \leq P_{im} \leq 1$ and $\sum_{m=1}^{M} P_{im} = 1$, for all $1 \leq i \leq n$. The estimation of the different parameters in $\Theta$ alternates in between region and weight estimates, as in standard regression trees, until a stopping criterion is met. During this process, the number of regions is increased by one at each loop and the matrix $P$ and the weights $\bs \gamma$ are gradually updated. The vector $\bs\sigma$ can either be based on a priori knowledge or be estimated through a grid search on a validation set.

\paragraph{Estimating $\bs{\gamma}$.} Given the regions $\{\Rg_m\}_{m=1}^{M}$ and the vector $\bs\sigma$, minimizing \eqref{eq:tree} with respect to $\bs{\gamma}$ leads to the least square estimator
\begin{equation}\label{eq:minimize_PRTree}
\hat{\bs{\gamma}} = \argmin_{\bs{\gamma} \in \R^M} \Biggl\{ \sum_{i=1}^n \biggl( Y_i - \sum_{m=1}^M \gamma_m P_{im} \biggr)^2 \Biggr\} = \argmin_{\bs{\gamma} \in \R^M} \Bigl\{ || \bs Y - P \bs \gamma||^2 \Bigr\},
\end{equation}
where $\bs Y = (Y_1, \cdots, Y_n)'$. If $P'P$ is not singular, the solution is unique and it is given by
\[
\hat{\bs{\gamma}} = (P' P)^{-1} P' \bs{Y}.
\]

\paragraph{Estimating $\{\Rg_m\}_{m=1}^{M}$.} Assume that $M$ regions, referred to as current regions, have already been identified, meaning that the current tree has $M$ leaves. As in standard regression trees, each current region $\Rg_m$, $1 \leq m \leq M$, can be decomposed into two sub-regions $\Rf*{L}$ and $\Rf*{R}$ with respect to a coordinate $1 \leq j \leq p$ and a splitting point $t$ (threshold) that minimizes \eqref{eq:tree}. Each possible split leads to the update of $P$, that now belongs to $\R^{n\times (M+1)}$ (the space of $n$ by $M+1$ real matrices), and $\bs{\gamma}$, that now belongs to $\R^{M+1}$. The corresponding $\hat{\bs{\gamma}}$ is obtained through \eqref{eq:minimize_PRTree}. To make explicit the dependence of these quantities on the region, the splitting variable and the threshold, we shall use the notation $P^{(m)}(j,t)$ and $\hat{\bs\gamma}^{(m)}(j,t)$. The best split for the current region $\Rg_{m}$ solves
\[
\argmin_{ (j,t) \in \mathcal{J} \times \mathcal{T}^{(m)}_{j}}\Biggl\{ \sum_{i=1}^n \biggl(Y_i - \sum_{k=1}^{M+1} \hat{\gamma}_k^{(m)}(j,t) P_{ik}^{(m)}(j,t) \biggr)^2 \Biggr\},
\]
where $\mathcal{J} = \{j \in \N : 1 \leq j \leq p\}$ and $\mathcal{T}^{(m)}_{j}$ denotes the set of splitting points for region $\Rg_m$ and variable $j$ (more precisely, the set of middle points of the observations from $\Rg_m$ projected on the $j$th coordinate). At each step, the split that leads to the smallest mean square error is selected.

Note that, unlike in CART, a region $\Rg_m$ in a PRTree need not contain any observations. This is because $\Psi^\ast(\bs X; \Rg_m, \bs\sigma)$ defines a conditional probability (or association measure) that does not require $\bs X$ to actually fall within the region -- it only measures how compatible $\bs X$ is with that region. Nevertheless, the observed data still provide a natural set of candidate thresholds for splitting, and we adopt the standard midpoint rule for computational efficiency and consistency with tree-based methods.

\paragraph{Stopping.} The process terminates when the maximum number of regions ($M_{\max}$) is reached, the reduction in loss $\Delta L$ falls below a threshold $\epsilon$, or other stopping criteria (e.g., depth, minimum node size) are met.\par \vspace{1\baselineskip}

\citet{alkhoury2020} show that the PRTree learned from a training set of size $n$, with no missing data, denoted $\hat{f}_{\mathrm{PR}}^{(n)}$, is consistent in the sense that
\[
\lim _{n \to\infty} \E\left(\left|\hat{f}_{\mathrm{PR}}^{(n)}(\bs X)-\E(Y|\bs X)\right|^2\right)=0.
\]
The consistency is a desirable theoretical development, but the proof of such result is quite involved even in the standard framework without missing data. Extending the results to account for the missing data handling mechanism in \eqref{eq:missing_handling} is a non-trivial task and will be explored in future works.

\section{Empirical Application}\label{sec:aplic}

The objective of this empirical study is to evaluate the practical behavior of the proposed missing-data extensions in real regression problems containing naturally occurring missing covariates. Rather than establishing superiority over alternative missing-data methodologies, the goal is to assess whether probabilistic regression trees remain a viable modeling option when incomplete predictor information is handled internally by the algorithm. We consider a collection of benchmark regression datasets exhibiting heterogeneous sample sizes, predictor dimensions, and missing-data patterns. The proposed extensions are evaluated through cross-validation and compared with classical CART regression trees, which provide a natural baseline for assessing the practical impact of probabilistic region assignments and internal missing-data handling.

Particular attention is given to the influence of the missing-data strategy, the proxy selection criterion, and the smoothing distribution used by the probabilistic model. The empirical analysis focuses on predictive accuracy and robustness across datasets with naturally occurring missing values. The objective is not to identify a universally superior specification, but rather to investigate how the proposed extensions behave under different modeling choices and whether they provide a practical alternative for users who prefer not to perform a separate missing-data treatment before fitting a probabilistic regression tree.

\subsection{Main empirical question}
\label{subsec:main-empirical-question}

The empirical study addresses two questions. The first is whether PRTrees remain competitive with classical CART regression trees when applied to datasets containing naturally occurring missing covariate values. This comparison is based on predictive performance measured through RMSE and relative predictive error.

The second question concerns the effect of the main modeling choices available in the proposed methodology. In particular, the experiments investigate how predictive performance varies with the missing-data extension (\texttt{fill\_type}), the proxy-selection criterion (\texttt{proxy\_crit}), and the smoothing density specification $\phi$. The objective is to identify which combinations of these components are associated with the best predictive performance across the datasets considered in the study.

\subsection{Data Acquisition and Preparation}

The empirical study was conducted using real-world regression datasets obtained primarily from the UCI Machine Learning Repository. Candidate datasets were initially identified through the \texttt{ucimlrepo} R package \citep{ucimlrepo}, which provides programmatic access to dataset metadata and facilitates large-scale screening of regression problems. To ensure a diverse yet computationally manageable benchmark, datasets were filtered according to basic characteristics such as sample size, predictor dimension, and response type. An additional benchmark dataset, \textit{Ozone}, available in the \texttt{mlbench} package \citep{mlbench}, was also included in the analysis.

The final collection consists of five regression datasets: \textit{Communities and Crime} (\emph{Crime}), \textit{Productivity Prediction of Garment Employees} (\emph{Garment}), \textit{Auto MPG}, \textit{Automobile}, and \textit{Ozone}. These datasets exhibit substantial heterogeneity in both sample size and predictor dimension, while all containing naturally occurring missing values in the predictor variables. This diversity provides a realistic setting for evaluating probabilistic regression trees under incomplete covariate information.

Prior to model fitting, observations with missing response values were removed whenever necessary. Missing values in the predictor variables were preserved and handled directly by the proposed methods. Categorical predictors were excluded from the analysis so that all models were fitted using numerical covariates only. This preprocessing step avoids the introduction of additional variability associated with categorical encoding schemes and allows the empirical evaluation to focus exclusively on the effects of missing predictor information. Table~\ref{tab:datasets} summarizes the main characteristics of the datasets considered in the empirical study.

\begin{table}[ht]
\caption{Summary of the datasets used in the empirical evaluation: $n$ and $p$ denote the dataset dimensions; $m_{global}$ is the percentage of missing predictor cells among all $n\times p$ predictor values; $n_{miss}$ is the number of observations containing at least one missing predictor value.}\label{tab:datasets}
\centering
\begin{tabular}[t]{lcccrrrr}
\toprule
Dataset & Source & UCI ID & Response range & $n$ & $p$ & $m_{global}(\%)$ & $n_{miss}$ (\%)\\
\midrule
Crime & UCI & 183 & $[0, 1]$ & 1994 & 122 & 15.15 & 1675 (84.00\%)\\
Garment & UCI & 597 & $[0.23, 1.12]$ & 1197 & 10 & 4.23 & 506 (42.27\%)\\
Auto MPG & UCI & 9 & $[9, 46.6]$ & 398 & 7 & 0.22 & 6\,\, (1.51\%)\\
Ozone & mlbench & - & $[1, 38]$ & 361 & 9 & 6.03 & 158 (43.77\%)\\
Automobile & UCI & 10 & $[-2, 3]$ & 205 & 17 & 1.69 & 46 (22.44\%)\\
\bottomrule
\end{tabular}
\end{table}

The datasets differ substantially not only in their overall proportion of missing predictor values, but also in the fraction of observations affected by missingness. For example, although the Ozone dataset contains only $6.03\%$ missing predictor cells, approximately $44\%$ of its observations contain at least one missing predictor value. A similar phenomenon occurs in the Crime dataset, where $15.15\%$ of the predictor cells are missing but nearly $84\%$ of the observations are affected. By contrast, only six observations in Auto MPG contain missing predictor values. These differences are expected to influence the potential benefits of the probabilistic mechanisms introduced in PRTree, since they determine how frequently incomplete observations must be propagated through the tree.

\subsection{Experimental Configuration}

\paragraph{Shared configuration.}
Predictive performance was assessed using the Root Mean Squared Error (RMSE) under 15-fold cross-validation. The same data partitions were used across all competing methods and experimental configurations in order to ensure direct comparability between predictive results. In addition to predictive accuracy, computational time was recorded for all methods and configurations. Reported runtimes correspond to the complete model-fitting procedure associated with each method.

\paragraph{CART configuration.}
The \texttt{rpart} implementation was used with internal pruning cross-validation configured through \texttt{xval = 10}. The experimental configuration employed complexity parameter \texttt{cp = 0.001}, minimum splitting size supplied through \texttt{minsplit}, defined as approximately 1\% of the effective training sample size, and maximum tree depth supplied through \texttt{maxdepth = 30} (the maximum allowed by the package). CART control parameters were initialized from the default \texttt{rpart.control()} configuration and updated whenever compatible arguments were supplied through the experimental configuration. Tree pruning was subsequently performed using the complexity parameter associated with the minimum internal cross-validated prediction error in the corresponding cost-complexity table.

\paragraph{PRTree configuration.}
Tree construction was controlled through \texttt{cp = 0.001}, meaning that a split was accepted only if the corresponding reduction in MSE exceeded 0.1\%. The maximum number of terminal regions was fixed at \texttt{max\_terminal\_nodes = 100}, while \texttt{prop\_min = 0.01} yielded a minimum node size of approximately 1\% of the effective training sample size. Additional probability-based stopping restrictions were imposed through \texttt{perc\_x = 0.1} and \texttt{p\_min = 0.05}, corresponding to the default values of the \texttt{PRTree} package. Candidate split search employed \texttt{by\_node = TRUE} and $\texttt{n\_candidates} = \max(\lfloor p/2 \rfloor,3)$, where $p$ denotes the number of predictors in the corresponding dataset.

Smoothing-parameter selection employed \texttt{grid\_size = 8}, corresponding to the default value of the \texttt{PRTree} package and generating eight positive candidate values for each component of $\bs\sigma$. The candidate grid was augmented with the additional value \texttt{tiny\_sigma = 0}, allowing the limiting case of vanishing smoothing parameters to recover the hard-split behavior of classical CART trees. The parameter vector $\bs\sigma$ was selected using an internal validation sample approximately equal in size to a single cross-validation fold. For the smoothing density function $\phi$, nine specifications were considered: Gaussian, Student-$t$, lognormal, and Gamma. The Student-$t$ specification employed degrees of freedom $\texttt{df}\in\{3,5\}$. The lognormal specification employed dispersion parameters $\texttt{sdlog}\in\{0.5,1,2\}$. The Gamma specification employed shape parameters $\texttt{shape}\in\{0.5,2,5\}$.

The empirical evaluation considered the three proposed missing-data extensions associated with \texttt{fill\_type} values 0, 1, and 2, together with the three Stage 1 proxy-selection criteria \texttt{"mean"}, \texttt{"var"}, and \texttt{"both"}. Combining the three missing-data strategies, three proxy criteria, and nine smoothing density specifications resulted in 81 distinct PRTree configurations for each dataset.

\subsection{Relative predictive error}
\label{subsec:relative-error}

The datasets considered in the empirical study correspond to distinct prediction problems and differ substantially in scale and predictive characteristics. Consequently, direct comparisons of raw RMSE values across datasets are of limited interest. For this reason, each PRTree configuration was evaluated relative to the corresponding CART performance obtained on the same dataset and cross-validation fold.

Let $\mathrm{RMSE}_{d}(m,k)$ denote the root mean squared error obtained on dataset $d$ by method $m$ on fold $k$, and let $\mathrm{RMSE}_{d}^{\texttt{CART}}(k)$ denote the RMSE obtained by CART on the same fold. Relative predictive error was defined as
\[
\Delta_{d}(m,k)=\frac{\mathrm{RMSE}_{d}(m,k)-\mathrm{RMSE}_{d}^{\texttt{CART}}(k)}{\mathrm{RMSE}_{d}^{\texttt{CART}}(k)}.
\]
Under this definition, $\Delta_{d}(m,k)=0$ indicates identical predictive performance, negative values indicate that the corresponding PRTree configuration outperformed CART, and positive values indicate inferior performance relative to CART.

The quantity $\Delta_{d}(m,k)$ measures predictive performance relative to a common baseline within each dataset. Although the datasets remain inherently different and should not be interpreted as directly comparable prediction tasks, the relative error makes it possible to assess whether particular modeling choices, such as the missing-data handling strategy, proxy-selection criterion, or smoothing specification, tend to improve or deteriorate predictive performance across the collection of datasets considered in this study. Moreover, because all methods and configurations were evaluated using identical cross-validation partitions within each dataset, relative predictive errors can be compared directly across competing PRTree configurations.

\subsection{Results}\label{subsec:results}

\subsubsection{Predictive performance relative to CART}

To better understand the contribution of the probabilistic components introduced in PRTree, Figures~\ref{fig:delta-boxplots} and~\ref{fig:winrate-heatmap} compare its predictive performance against the classical CART algorithm. Throughout this section, negative values of $\Delta$ indicate that PRTree achieved lower prediction error than CART, whereas positive values indicate the opposite.

Figure~\ref{fig:delta-boxplots} provides an overall summary of the effect of the fill strategy. For each dataset, the boxplots display the distribution of $\Delta$ values obtained across all smoothing distributions, proxy-selection criteria, and cross-validation folds associated with a given value of \texttt{fill\_type}. The horizontal red line corresponds to $\Delta=0$, representing equal predictive performance between PRTree and CART. To improve readability, the vertical axis was truncated at a fixed positive threshold. Whenever observations exceeded this limit, the symbol {\color{red}\texttt{n$\blacktriangle$}} was added above the corresponding boxplot, where $n$ denotes the number of omitted observations. Therefore, these annotations identify configurations that produced larger prediction errors than those displayed within the plotting range.

\begin{figure}[ht]
\centering
\includegraphics[width=\textwidth]{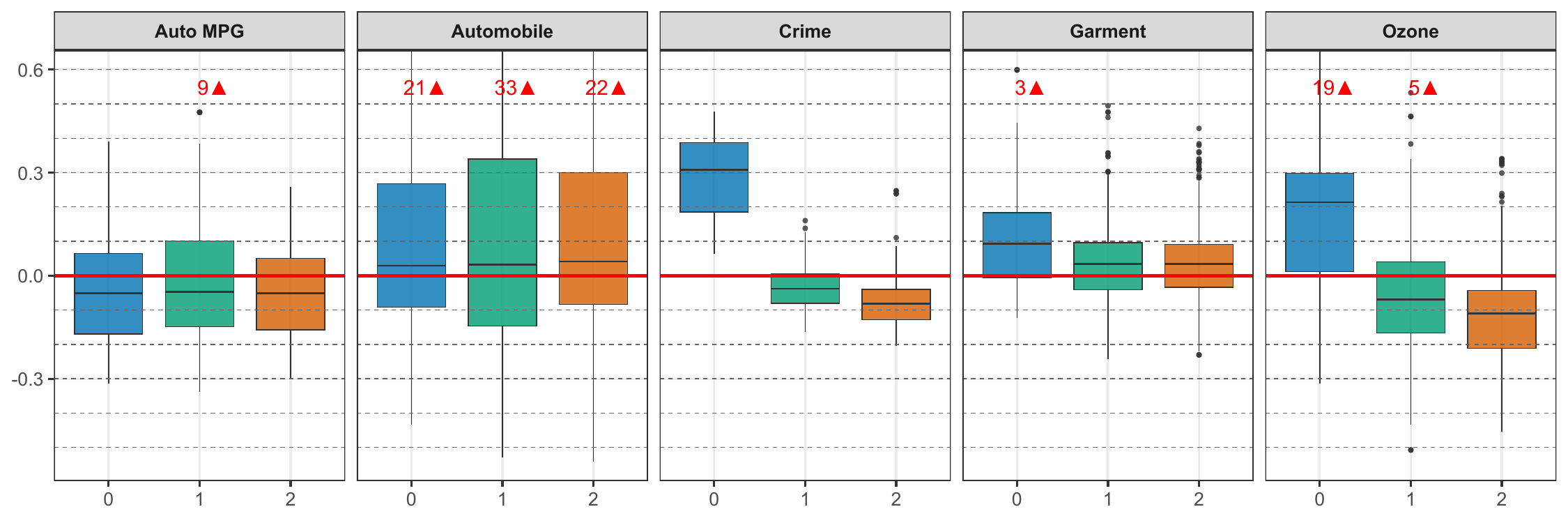}
\caption{Distribution of relative predictive errors grouped by \texttt{fill\_type}. Negative values indicate that PRTree outperformed CART.}
\label{fig:delta-boxplots}
\end{figure}

While the boxplots summarize the overall behavior of each fill strategy, they do not distinguish between smoothing distributions or proxy-selection criteria. A more detailed view is provided by Figure~\ref{fig:winrate-heatmap}, which reports the empirical win rate of every evaluated configuration. In Figure~\ref{fig:winrate-heatmap}, rows correspond to combinations of fill strategy and proxy-selection criterion. The labels \texttt{f0}, \texttt{f1}, and \texttt{f2} denote \texttt{fill\_type} 0, 1, and 2, respectively. The letters \texttt{B}, \texttt{M}, and \texttt{V} refer to the combined, mean-based, and variance-based proxy criteria. Columns correspond to the smoothing distributions: \texttt{N} denotes the Gaussian distribution, \texttt{t(3)} and \texttt{t(5)} denote Student-$t$ distributions, \texttt{LN} denotes log-normal distributions, and \texttt{G} denotes Gamma distributions. Each cell contains the proportion of cross-validation folds in which PRTree achieved lower prediction error than CART. Values above $50\%$ indicate that PRTree outperformed CART more often than not, whereas values below $50\%$ indicate the opposite.

\begin{figure}[ht]
\centering
\includegraphics[width=\textwidth]{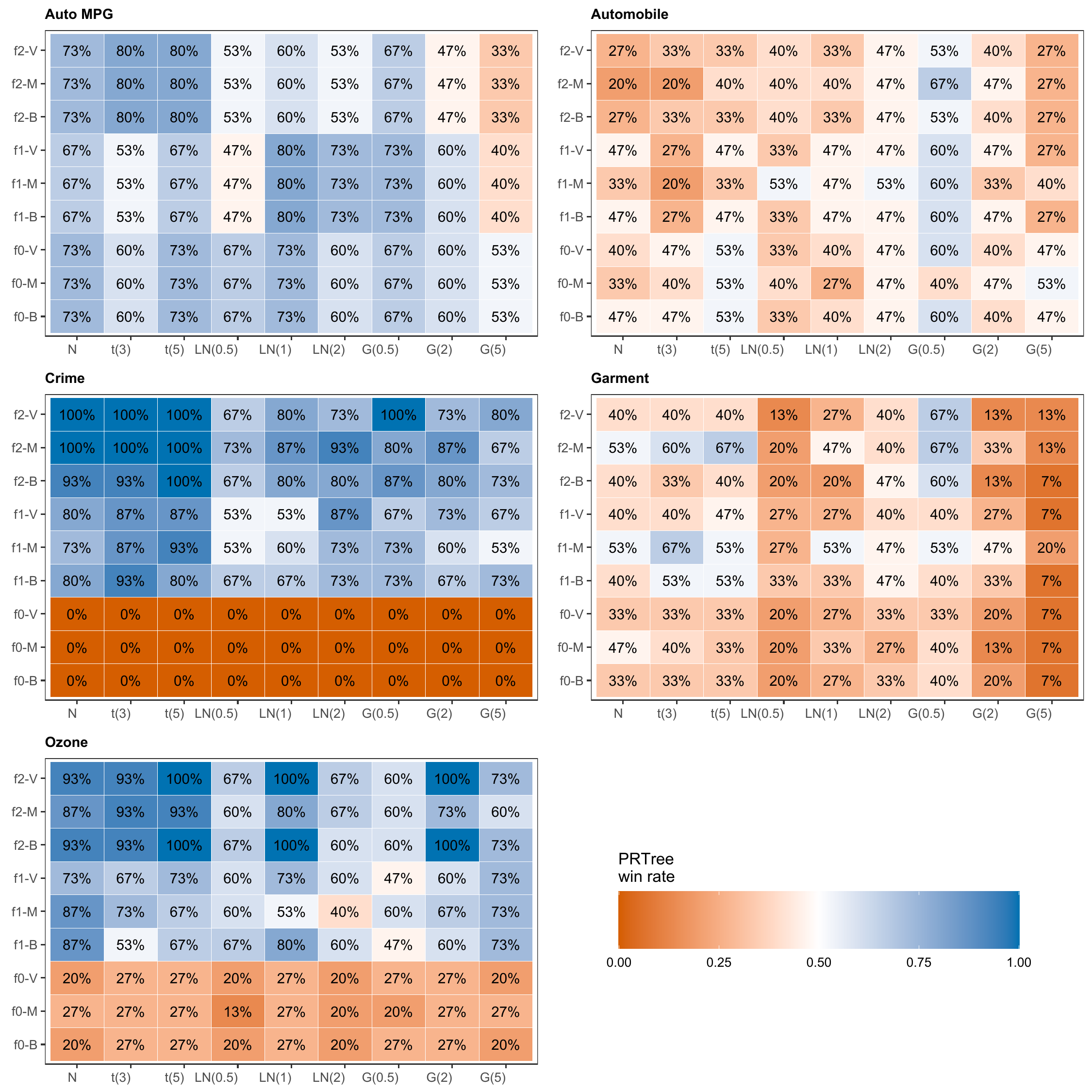}
\caption{Empirical win rate of each PRTree configuration relative to CART.}
\label{fig:winrate-heatmap}
\end{figure}

The most important result emerging from Figures~\ref{fig:delta-boxplots} and~\ref{fig:winrate-heatmap} is the dominant role played by the fill strategy. Across all datasets, changes in \texttt{fill\_type} produce substantially larger variations in predictive performance than changes in either the smoothing distribution or the proxy-selection criterion. Configurations sharing the same fill strategy often exhibit similar win rates even when different distributions and proxy criteria are employed, whereas changing the fill strategy can completely alter the relative performance of PRTree with respect to CART.

The influence of the proxy-selection criterion appears comparatively small. Within a fixed fill strategy, the rows associated with the mean-based, variance-based, and combined criteria usually display similar win rates, and no criterion consistently dominates across datasets. This suggests that the proxy criterion acts primarily as a secondary tuning parameter whose influence is considerably weaker than that of the fill mechanism.

The role of the smoothing distribution is more subtle. The heatmaps indicate that distributional choices can affect predictive performance, but their influence is highly dependent on both the selected fill strategy and the characteristics of the dataset. In datasets such as Crime and Ozone, where \texttt{fill\_type} = 1 and 2 clearly outperform \texttt{fill\_type} = 0, the improvement is observed across a broad range of distributions. In these cases, changing the distribution typically modifies the magnitude of the gain rather than altering the overall ranking of the configurations. This suggests that most of the predictive improvement is attributable to the fill mechanism itself, while the smoothing distribution acts primarily as a refinement component.

Nevertheless, some distributional patterns can still be identified. Student-$t$ distributions frequently appear among the strongest configurations, particularly when combined with the most successful fill strategies. Although the experiments do not directly explain this behavior, the results suggest that heavier-tailed smoothing mechanisms may provide a more effective allocation of incomplete observations near splitting boundaries. At the same time, the superiority of Student-$t$ distributions is not universal, and competitive results are frequently obtained using log-normal and Gamma distributions. Consequently, the choice of smoothing distribution should be viewed as a secondary tuning decision: it can improve performance once an appropriate fill strategy has been selected, but it rarely compensates for an inadequate fill mechanism.

\paragraph{Auto MPG.}

The Auto MPG dataset illustrates a relatively stable scenario. The three fill strategies produce similar boxplots, all centered slightly below zero, indicating that PRTree tends to outperform CART regardless of the specific probabilistic configuration. The heatmap reinforces this conclusion, with most configurations achieving win rates above $50\%$ and several exceeding $70\%$. Although \texttt{fill\_type}=2 appears to provide the strongest overall performance, the differences among fill strategies remain moderate. One possible explanation is the extremely low prevalence of missing values in this dataset. As shown in Table~\ref{tab:datasets}, only six observations ($1.51\%$ of the sample) contain at least one missing predictor value, corresponding to merely $0.22\%$ of all predictor cells. Consequently, the probabilistic components of PRTree are activated only rarely, limiting the potential impact of the fill strategy on predictive performance. This is consistent with the relatively small differences observed among the competing configurations.

\paragraph{Automobile.} The Automobile dataset presents the most balanced comparison among all considered datasets. The boxplots exhibit substantial variability and a considerable number of observations above the plotting threshold, indicating that some configurations can perform substantially worse than CART. Consistently, the heatmap shows that most win rates lie between $30\%$ and $60\%$, with only a few configurations exceeding these limits. No clear pattern emerges with respect to either fill strategy or smoothing distribution. This absence of a dominant structure suggests that neither method consistently outperforms the other in this setting. The characteristics of the dataset reported in Table~\ref{tab:datasets} are consistent with this behavior. The overall proportion of missing predictor cells is relatively small ($1.69\%$), and only $22.44\%$ of the observations contain at least one missing predictor value. Consequently, the probabilistic mechanisms affect only a limited portion of the data, reducing the opportunity for PRTree to obtain substantial gains over CART. Moreover, Automobile is the smallest dataset considered in the study ($n=205$), making performance estimates more sensitive to variations in the training sample. The combination of a limited amount of missing information and a relatively small sample size may therefore help explain the absence of a clear advantage for either method.

\paragraph{Communities and Crime.} The Crime dataset provides the strongest evidence of the importance of the fill mechanism. The contrast between \texttt{fill\_type} = 0 and the remaining strategies is striking. Every configuration associated with \texttt{fill\_type} = 0 loses to CART in all cross-validation folds, producing win rates equal to $0\%$. However, once the fill mechanism is modified, the behavior changes completely. Most configurations associated with \texttt{fill\_type} = 1 and 2 achieve win rates above $80\%$, and several reach $100\%$. Such a dramatic transition cannot be attributed to the choice of distribution or proxy criterion, since it occurs consistently across almost all configurations within each fill strategy. Instead, it strongly suggests that the way incomplete observations are propagated through the tree is the dominant factor determining predictive performance in this dataset. The characteristics of the dataset reported in Table~\ref{tab:datasets} help explain this behavior. The Crime dataset exhibits both the largest proportion of missing predictor cells ($15.15\%$) and the largest proportion of affected observations ($84.00\%$). As a result, the treatment of incomplete observations becomes a central component of the learning procedure. Under these conditions, differences in the fill mechanism have a direct impact on a substantial fraction of the training data, which helps explain the dramatic performance gap observed between \texttt{fill\_type} = 0 and the remaining strategies.

\paragraph{Productivity Prediction of Garment Employees.} The Garment dataset exhibits the opposite behavior of Crime and Ozone. Most configurations achieve win rates below $50\%$, indicating that CART generally provides lower prediction errors. The heatmap reveals that this pattern is remarkably consistent across distributions and proxy-selection criteria, suggesting that the observed disadvantage is not driven by a particular probabilistic choice. Unlike Crime and Ozone, where the probabilistic treatment of missing observations produces substantial gains, the Garment dataset provides little evidence that the additional flexibility introduced by PRTree translates into systematic improvements in predictive performance. Although a few isolated configurations achieve competitive results, CART remains the most reliable choice overall. Interestingly, this result shows that a large number of incomplete observations does not automatically imply that probabilistic treatment will improve predictive performance. Although more than $42\%$ of the observations contain missing predictor values, CART remains competitive and often superior. This suggests that the effectiveness of the probabilistic mechanisms depends not only on the amount of missing information, but also on how that missingness interacts with the underlying structure of the prediction problem.

\paragraph{Ozone.} The Ozone dataset displays a pattern similar to that observed for Crime, although less extreme. A clear monotonic improvement is visible when moving from \texttt{fill\_type} = 0 to \texttt{fill\_type} = 2. The boxplots progressively shift from predominantly positive relative errors to predominantly negative ones, while the heatmap reveals a corresponding increase in win rates. Several configurations associated with \texttt{fill\_type} = 2 achieve win rates above $90\%$, whereas most configurations associated with \texttt{fill\_type} = 0 remain below $30\%$. The consistency of this pattern across smoothing distributions and proxy-selection criteria reinforces the conclusion that the fill mechanism is the primary driver of predictive performance. The dataset characteristics reported in Table~\ref{tab:datasets} provide a plausible explanation for this behavior. Although only $6.03\%$ of the predictor cells are missing, approximately $44\%$ of the observations contain at least one missing predictor value. Consequently, a large fraction of the sample is affected by missingness, making the treatment of incomplete observations an important component of the learning process. This is consistent with the strong improvements observed when moving from \texttt{fill\_type} = 0 to the more sophisticated fill strategies.\vskip 0.5\baselineskip

Taken together, the results reveal a clear hierarchy among the probabilistic design choices. The fill strategy is by far the most influential component, frequently determining whether PRTree outperforms or underperforms CART. The smoothing distribution occupies an intermediate position, refining the behavior of an already suitable fill strategy, whereas the proxy-selection criterion generally exerts only a minor influence on the final results. From a practical perspective, model selection should therefore proceed hierarchically: first identifying a promising fill strategy and only then refining the smoothing distribution and proxy-selection criterion.

\subsubsection{Computational cost}

In addition to predictive accuracy, computational cost was recorded for all methods and configurations considered in the empirical study. The reported runtimes were obtained on a desktop computer equipped with an Intel\textsuperscript{\textregistered} Core\textsuperscript{\texttrademark}
%Intel(R) Core(TM) 
i5-4590 CPU (3.30\,GHz) and 16\,GB of RAM, running Windows 10. Table~\ref{tab:runtime} summarizes the observed runtimes for CART and PRTree across all datasets. Along with the computational results, the table reports the sample size $n$, predictor dimension $p$, and the number of candidate splits fully evaluated at each node, $n_{\mathrm{cand}}=\max\{p/2,3\}$. These quantities are useful for interpreting the observed runtimes, since the computational burden of PRTree depends both on the amount of data processed at each node and on the number of candidate splits receiving a full MSE evaluation.

\begin{table}[ht]
\caption{Computational cost of CART and PRTree. For each dataset, the table reports the sample size $n$, predictor dimension $p$, the number of candidate splits fully evaluated at each node ($n_{\mathrm{cand}}$), the total runtime of CART (in seconds), the cumulative runtime required to evaluate all 81 PRTree configurations (in minutes), and the minimum and maximum runtimes observed among individual PRTree configurations (in seconds).}
\label{tab:runtime}
\centering
\begin{tabular}{lrrrrrrr}
\toprule
Dataset & $n$ & $p$ & $n_{\mathrm{cand}}$ & CART (s) & PRTree (min) & PRTree min (s) & PRTree max (s)\\
\midrule
Crime      & 1994 & 122 & 61 & 8 & 2106.05 & 308 & 7966\\
Garment    & 1197 & 10  & 5  & 1 & 100.60  & 22  & 217\\
Auto MPG   & 398  & 7   & 4  & 1 & 56.23   & 5   & 112\\
Ozone      & 361  & 9   & 4  & 1 & 55.45   & 10  & 118\\
Automobile & 205  & 17  & 8  & 1 & 63.10   & 9   & 116\\
\bottomrule
\end{tabular}
\end{table}

The computational burden of PRTree is substantially larger than that of CART. This increase is expected because each PRTree fit requires repeated evaluations of probabilistic node assignments and an internal smoothing-parameter selection procedure. For every configuration, nine candidate values of the smoothing parameter are evaluated, corresponding to the eight values generated by \texttt{grid\_size = 8} together with the additional value \texttt{tiny\_sigma = 0}. Since the tree structure depends on the smoothing parameter, each candidate value requires a complete reconstruction of the tree.

An additional source of computational cost arises from the split-selection procedure. At each node, all possible splits are first ranked using the selected proxy criterion. The best $n_{\mathrm{cand}}=\max\{p/2,3\}$ candidate splits are then fully evaluated using the MSE criterion. Consequently, the computational effort associated with split selection depends both on the number of observations contributing to the node statistics and on the number of candidate splits receiving a full MSE evaluation.

The runtimes reported in Table~\ref{tab:runtime} suggest that both sample size and $n_{\mathrm{cand}}$ play an important role in determining computational cost. The Crime dataset required more than 35 hours to evaluate all configurations, whereas the remaining datasets required between approximately one and two hours. This behavior is consistent with the fact that Crime combines the largest sample size ($n=1994$) with the largest value of $n_{\mathrm{cand}}$ ($61$). By contrast, although Automobile evaluates more candidate splits than Garment ($8$ versus $5$), its substantially smaller sample size ($205$ versus $1197$ observations) results in a lower overall runtime. Taken together, these results suggest that the computational burden of PRTree is driven by the interaction between the amount of data processed and the number of candidate splits receiving a full MSE evaluation.

Missingness may also affect computational cost because incomplete observations require additional processing during the calculation of the proxy measures used to rank candidate splits. As the proportion of missing values increases, these calculations become more frequent throughout tree construction. Nevertheless, the runtimes reported in Table~\ref{tab:runtime} do not suggest that missingness alone is the dominant driver of computational cost. The largest differences in runtime are more closely associated with the combined effects of sample size and the number of candidate splits receiving a full MSE evaluation.

To investigate how the probabilistic design choices affect computational cost, Figure~\ref{fig:time-heatmap} reports the average runtime of each evaluated configuration. The notation follows the same convention adopted in Figure~\ref{fig:winrate-heatmap}: rows correspond to combinations of fill strategy and proxy-selection criterion, while columns correspond to the smoothing distributions and their associated parameters.

\begin{figure}[ht]
\centering
\includegraphics[width=\textwidth]{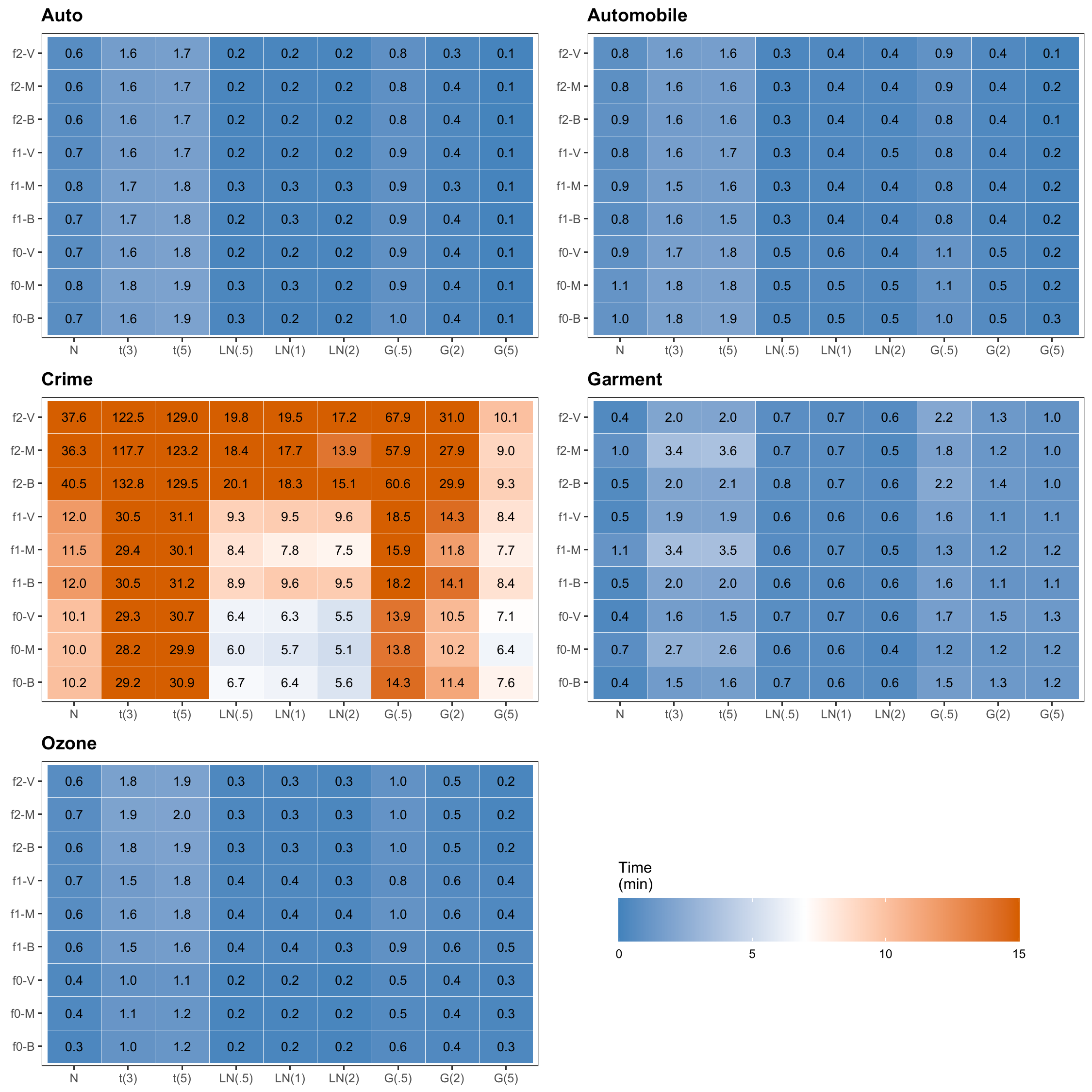}
\caption{Average runtime (minutes) of each PRTree configuration.}
\label{fig:time-heatmap}
\end{figure}

Several patterns emerge from Figure~\ref{fig:time-heatmap}. First, the choice of smoothing distribution has a substantial impact on runtime. Across all datasets, the Student-$t$ distributions are consistently the most computationally demanding alternatives. The configurations associated with \texttt{t(3)} and \texttt{t(5)} systematically require longer execution times than the corresponding Gaussian, log-normal, and Gamma configurations. In contrast, the log-normal distributions are typically among the fastest options, while the Gamma distributions occupy an intermediate position.

Second, the fill strategy also affects computational cost. Within a given dataset, runtimes generally increase when moving from \texttt{fill\_type} = 0 to \texttt{fill\_type} = 2. This pattern is particularly visible for the Crime dataset, where some \texttt{fill\_type} = 2 configurations require several times more computation than their \texttt{fill\_type} = 0 counterparts. Similar, although less pronounced, patterns can also be observed in the remaining datasets. Unlike the large differences in runtime observed across datasets, which are largely associated with predictor dimension and the corresponding value of $n_{\mathrm{cand}}$, these within-dataset variations are likely related to the additional probabilistic allocations required by more sophisticated fill mechanisms.

Comparing Figures~\ref{fig:winrate-heatmap} and~\ref{fig:time-heatmap} reveals that the most computationally expensive configurations are not necessarily those with the strongest predictive performance. In the Crime and Ozone datasets, many of the best-performing configurations are indeed associated with \texttt{fill\_type} = 2, which is also the most expensive fill strategy. However, within a fixed fill strategy, the distributions that require the longest runtimes do not systematically achieve the highest win rates. Student-$t$ distributions are often among the slowest configurations, yet Gamma and log-normal distributions frequently attain comparable predictive performance at substantially lower computational cost.

Taken together, the results suggest that different factors govern runtime at different levels of the analysis. Across datasets, computational cost is driven primarily by predictor dimension through its effect on the number of candidate splits that receive a full MSE evaluation. Within a given dataset, runtime is further influenced by the fill strategy and, to a lesser extent, by the choice of smoothing distribution. Although some of the most accurate configurations are also among the most computationally demanding, the relationship between runtime and predictive performance is far from monotonic. Consequently, computational cost should be viewed as a consequence of specific algorithmic design choices rather than as a direct indicator of predictive quality.

\section{Conclusion}\label{sec:conc}

This paper introduced three different strategies to adapt probabilistic regression trees allowing it to handle missing predictor values directly during tree construction. Rather than relying on external imputation procedures or discarding incomplete observations, the proposed approach incorporates probabilistic mechanisms into the split-selection process, allowing incomplete observations to contribute to model estimation throughout the recursive partitioning procedure.

The empirical results demonstrate that PRTree can provide substantial predictive improvements over CART when missing predictor values affect a considerable fraction of the data. The magnitude of these improvements depends strongly on the adopted fill strategy. Across all datasets considered in this study, the fill mechanism emerged as the dominant modeling component, often producing larger performance differences than those associated with either the proxy-selection criterion or the choice of smoothing distribution. In contrast, the latter components primarily acted as refinement mechanisms, producing more localized improvements once an appropriate fill strategy had been selected.

The experiments also revealed that the effectiveness of the probabilistic framework depends on the amount of missing values present in the data. Datasets with a large proportion of observations containing missing predictor values, such as Crime and Ozone, benefited substantially from the proposed methodology. Conversely, when missing values affected only a small fraction of the observations, as in Auto MPG, the differences between PRTree and CART became considerably smaller. These findings suggest that the potential gains obtained from probabilistic tree construction are closely related to the extent to which incomplete observations participate in the learning process.

From a computational perspective, PRTree is substantially more demanding than CART. The additional cost arises from repeated tree reconstructions during smoothing-parameter selection and from the evaluation of multiple candidate splits using the full MSE criterion. The experiments indicate that computational cost is driven primarily by the interaction between sample size and the number of candidate splits selected for full evaluation. Nevertheless, the most computationally expensive configurations were not systematically associated with the best predictive performance, indicating that increased computational effort alone does not guarantee improved accuracy.

Several directions for future research remain open. The current implementation considers a fixed family of proxy measures and smoothing distributions, but alternative probabilistic specifications may further improve predictive performance or computational efficiency. Extensions to ensemble methods, such as random forests, constitute a particularly promising direction, as the probabilistic framework proposed here is naturally compatible with bootstrap aggregation and randomized split selection. Additional work is also needed to investigate alternative missing-data mechanisms and to develop more computationally efficient procedures for candidate-split evaluation.

Overall, the results indicate that probabilistic tree construction provides a viable and effective alternative for regression problems involving incomplete predictor information. By integrating missing-value handling directly into the tree-building process, PRTree offers a flexible framework that preserves the interpretability of regression trees while improving robustness in the presence of missing covariate values.

\subsection*{Acknowledgments}
G. Pumi and T.S. Prass gratefully acknowledge the financial support received by the Conselho Nacional de Desenvolvimento Cient\'ifico e Tecnol\'ogico -- CNPq Brasil  -- Bolsa de Produtividade em Pesquisa - Proc. 303281/2025-1 (Pumi) and 305886/2025-8 (Prass). A.S. Neimaier gratefully acknowledge the financial support received by the Coordena\c{c}\~ao de Aperfei\c{c}oamento de Pessoal de N\'ivel Superior – Brasil (CAPES) – Finance Code 001.

\bibliographystyle{elsarticle-harv}
\bibliography{biblio}

% \subsection*{Declaration of generative AI and AI-assisted technologies in the manuscript preparation process.}
% Statement: During the preparation of this work the authors used  ChatGPT (OpenAI, GPT-5.5). The system was used to support language editing, code development and data visualization. After using this tool, the authors reviewed and edited the content as needed and take full responsibility for the content of the published article.
\end{document}